\title{D-Shape: Demonstration-Shaped Reinforcement Learning via Goal Conditioning}
\author{Caroline Wang}
\affiliation{
  \institution{The University of Texas at Austin}
  \city{Austin}
  \country{Texas, United States}}
\email{caroline.l.wang@utexas.edu}
\author{Garrett Warnell}
\affiliation{
  \institution{Army Research Laboratory and \\ The University of Texas at Austin}
  \city{Austin}
  \country{Texas, United States}}
\email{garrett.a.warnell.civ@army.mil}
\author{Peter Stone}
\affiliation{
  \institution{The University of Texas at Austin and Sony AI}
  \city{Austin}
  \country{Texas, United States}}
\email{pstone@cs.utexas.edu}
\newcommand{\BibTeX}{\rm B\kern-.05em{\sc i\kern-.025em b}\kern-.08em\TeX}
\def\R{\mathds R}
\def\E{\mathds E}
\DeclareMathOperator*{\argmax}{arg\,max}
\newtheorem{theorem}{Theorem}
\keywords{reinforcement learning; goal-conditioned reinforcement learning; imitation from observation; suboptimal demonstrations}
\begin{document}

\pagestyle{fancy}
\fancyhead{}

\begin{abstract}
While combining imitation learning (IL) and reinforcement learning (RL) is a promising way to address poor sample efficiency in autonomous behavior acquisition, methods that do so typically assume that the requisite behavior demonstrations are provided by an expert that behaves optimally with respect to a task reward. If, however, suboptimal demonstrations are provided, a fundamental challenge appears in that the demonstration-matching objective of IL conflicts with the return-maximization objective of RL. This paper introduces D-Shape, a new method for combining IL and RL that uses ideas from reward shaping and goal-conditioned RL to resolve the above conflict. D-Shape allows learning from suboptimal demonstrations while retaining the ability to find the optimal policy with respect to the task reward. We experimentally validate D-Shape in sparse-reward gridworld domains, showing that it both improves over RL in terms of sample efficiency and converges consistently to the optimal policy in the presence of suboptimal demonstrations.
\end{abstract}

\maketitle 

\section{Introduction and Background} 
A longstanding goal of artificial intelligence is enabling machines to learn new behaviors. Towards this goal, the research community has proposed both imitation learning (IL) and reinforcement learning (RL). In IL, the agent is given access to a set of state-action expert demonstrations and its goal is either to mimic the expert's behavior, or infer the expert's reward function and maximize the inferred reward. In RL, the agent is provided with a reward signal, and its goal is to maximize the long-term discounted reward. While RL algorithms can potentially learn optimal behavior with respect to the provided reward signal, in practice, they often suffer from high sample complexity in large state-action spaces, or spaces with sparse reward signals. On the other hand, IL methods are typically more sample-efficient than RL  methods, but require expert demonstration data.
 
It seems natural to consider using techniques from imitation learning and demonstration data to speed up reinforcement learning. However, many IL algorithms implicitly perform divergence minimization with respect to the provided demonstrations, with no notion of an extrinsic task reward \cite{Ghasemipour2019ADM}. When we have access to both demonstration data \textit{and} an extrinsic task reward, we have the opportunity to combine IL and RL techniques, but must carefully consider whether the demonstrated behavior conflicts with the extrinsic task reward --- especially when demonstrated behavior is suboptimal. 
Moreover, standard IL algorithms are only valid in situations when demonstrations contain both state and action information, which is not always the case.

The community has recently made progress in the area of imitation from observation (IfO), which extends IL approaches to situations where  demonstrator action information is unavailable,  difficult to induce, or not appropriate for the task at hand. This last situation may occur if the demonstrator's transition dynamics differ from the learner's---for instance, when the expert is a human and the agent is a robot \cite{liuabishek2017, torabi2019}.
While there has been some work on performing IL or IfO with demonstrations that are suboptimal with respect to an implicit task that the demonstrator seeks to accomplish \cite{Chen2020LearningFS, Brown2019BetterthanDemonstratorIL, Wu2019ImitationLF, Brown2019ExtrapolatingBS}, to date, relatively little work has considered the problem of combining IfO and RL, where the learner's true task is explicitly specified by a reward function \cite{Taylor2011IntegratingRL}.

This paper introduces the D-Shape algorithm, which combines IfO and RL in situations with suboptimal demonstrations.
D-Shape requires only a single, suboptimal, state-only expert demonstration, and treats demonstration states as goals. To ensure that the optimal policy with respect to the task reward is not altered, D-Shape uses potential-based reward shaping to define a goal-reaching reward. We show theoretically that D-Shape preserves optimal policies, and show empirically that D-Shape improves sample efficiency over related approaches with both optimal and suboptimal demonstrations.

\begin{figure}[ht]
\begin{center}
  \includegraphics[scale=0.85]{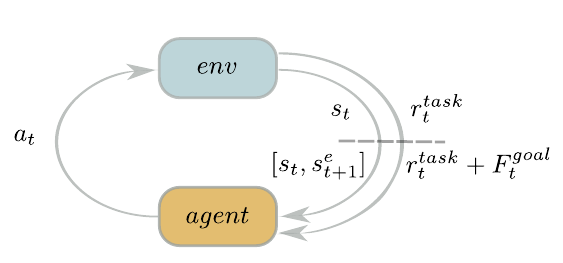}
  \caption{ D-Shape's interaction with the environment. The state $s_t$ and the task reward $r^{task}_t$ come from the environment.  $s_t$ is concatenated with the demonstration state $s^e_{t+1}$ as the goal, and $r^{task}$ is augmented with the potential-based goal reaching function, $F^{goal}_t$}.
  \label{figure:ours_data_gathering}    
\end{center}
\end{figure}

\section{Preliminaries}
\label{section:prelim}
This section introduces our notation, and technical concepts that are key to this work: reinforcement learning with state-only demonstrations, goal-conditioned reinforcement learning, and potential-based reward shaping.
\subsection{Reinforcement Learning with State-Only Demonstrations} 
\label{prelim:rl}
Let $M = (S, A, P, r^{task}, \gamma)$ be a finite-horizon Markov Decision Process (MDP) with horizon $H$, where $S$ and $A$ are the state and action spaces, $P(s' \mid s, a)$ is the transition dynamics of the environment, $r^{task}(s, a, s')$ is a deterministic extrinsic task reward, and $\gamma \in (0,1)$ is the discount factor.
The objective of reinforcement learning is to discover a policy $\pi(\cdot \mid s)$ that maximizes the expected reward induced by $\pi$, $\E_\pi [ \sum_{t=0}^{H-1} \gamma^t r^{task}(s_t, a_t, s_{t+1}) ]$. In this work, we seek to maximize the same objective, but to do so more efficiently by incorporating additional information in the form of a single, state-only demonstration $D^e = \{s_t^e\}_{t=1}^H$, that may be suboptimal with respect to the task reward. The extent to which the demonstration can improve learning efficiency may depend on its degree of suboptimality. However,  incorporating the demonstration into the reinforcement learning procedure should not alter the optimal policy with respect to $r^{task}$, no matter how suboptimal the demonstration is. Prior literature has referred to this desideratum as \textit{policy invariance} \cite{Ng99policyinvariance}.

\subsection{Goal-Conditioned Reinforcement Learning}
\label{prelim:gcrl}

Goal-conditioned reinforcement learning (GCRL) further considers a set of goals $G$ \cite{Schaul2015UniversalVF,  kaelbling93learningto}. While standard RL aims to find policies that can be used to execute a single task, the objective of GCRL is to learn a goal-conditioned policy $\pi(\cdot \mid [s, g])$, where the task is to reach any goal $g \in G$. Typically, $G$ is a predefined set of desirable \textit{states}, and the reward function depends on the goal. A common choice of reward function is the sparse indicator function for when a goal has been reached, $r_t^g = \mathds{1}_{s_t = g}$. 

Since it is challenging for RL algorithms to learn under sparse rewards, \citet{andrychowicz17her} introduced  hindsight experience replay (HER). In the setting considered by \citet{andrychowicz17her}, the goal is set at the beginning of an episode and remains fixed throughout. HER relies on the insight that even if a trajectory fails to reach the given goal $g$, transitions in the trajectory are successful examples of reaching \textit{future} states in the trajectory. More formally, given a transition with goal $g$, $([s_t, g], a_t, r_t^g, [s_{t+1}, g])$, HER samples a set of goals from future states in the episode, $\mathcal{G}$. For all goals $g' \in \mathcal{G}$, HER relabels the original transition to $([s_t, g'], a_t, r_t^{g'}, [s_{t+1}, g'])$ and stores the relabelled transition in a replay buffer. An off-policy RL algorithm is used to learn from the replay buffer. 

In this work, we use demonstration states as goals, allowing the goal to change dynamically throughout the episode, and employ the relabelling technique from HER.

\subsection{Potential-Based Reward Shaping}
\label{prelim:potential}
Define a \textit{potential function} $\phi: S \mapsto \R$. Let $F(s, s') \coloneqq \gamma \phi(s') - \phi(s)$; $F$ is called a \textit{potential-based shaping function}. Consider the MDP $M' = (S, A, P, R' \coloneqq r^{task} + F, \gamma)$, where $r^{task}$ is an extrinsic task reward.  We say $R'$ is a \textit{potential-based reward function}. \citet{Ng99policyinvariance} showed that $F$ being a potential-based shaping function is both a necessary and sufficient condition to guarantee that (near) optimal policies learned in $M'$ are also (near) optimal in $M$ --- that is, policy invariance holds. This work leverages potential-based reward functions as goal-reaching rewards,  to bias the learned policy towards the demonstration trajectory.

\section{Method}
\label{section:method}

We now introduce D-Shape, our approach to improving sample efficiency while leaving the optimal policy according to the task reward unchanged. The training procedure is summarized in Algorithm \ref{alg:ours}.

D-Shape requires only a single, possibly suboptimal, state-only demonstration. We are inspired by model-based IfO methods that rely on inverse dynamics models (IDMs) --- models that, given a current state and a target state, return the action that induces the transition. We observe that an IDM can be viewed as a single-step goal-reaching policy. Although D-Shape does not assume access to an IDM, we hypothesize that providing expert demonstration states as goals to the reinforcement learner might be a useful inductive bias. HER is used to form an implicit curriculum to learn to reach demonstration states. As such, there are three components of D-Shape: state augmentation, reward shaping, and a goal-relabelling procedure. Figure \ref{figure:ours_data_gathering} depicts the state augmentation and reward shaping process of a D-Shape learner as it interacts with the environment. Each component is described below.
 
\begin{algorithm}[htbp]
  \begin{algorithmic}[1]
    \REQUIRE Single, state-only demonstration $D^e \coloneqq \{s^e_t\}_{t=1}^H$
    \STATE Initialize $\theta$ at random
    \WHILE{$\theta$ is not converged}
        \FOR{$t=0:H-1$}
            \STATE Execute $a_t \sim \pi_\theta(\cdot \mid [s_t, s^e_{t+1}])$, observe  $r_t^{task}$, $s_{t+1}$
            \STATE Compute $r_t^{goal} = r_t^{task} + F_t^{goal}$ using Equation (\ref{eq:goal_rew})
            \STATE Store transition 
            \STATE $([s_t, s^e_{t+1}], a_t, r_t^{goal}, [s_{t+1}, s^e_{t+2}])$ in  buffer
        \ENDFOR
        \FOR{$t=1:H-1$}
            \STATE Sample set of consecutive goal states $\mathcal{G}$ uniformly from episode
            \FOR{$(g, g') \in \mathcal{G}$}
                \STATE Recompute $F_t^{goal}$ component of $r_t^{goal}$ using $(g, g')$
                \STATE Relabel transition to 
                \STATE $([s_t, g], a_t, r_t^{goal'}, [s_{t+1}, g'])$
                \STATE Store relabelled transition in replay buffer
            \ENDFOR
        \ENDFOR
        \STATE Update $\theta$ using off-policy RL algorithm
    \ENDWHILE
    \RETURN{$\theta^*$}
  \end{algorithmic}
  \caption{D-Shape}
  \label{alg:ours}
\end{algorithm}

\subsubsection{State augmentation. } 
During training, the policy gathers data with demonstration states as behavior goals: $a_t \sim \pi_\theta(a_t \mid [s_t, s^e_{t+1}])$. The agent observes the next state $s_{t+1}$ and the task reward $r_t^{task}$. The next state $s_{t+1}$ is augmented with the demonstration state $s^e_{t+2}$ as the goal. Note that our method employs dynamic goals, as the goal changes from time step $t$ to time step $t+1$.

\subsubsection{Reward shaping. }
The task reward $r_t^{task}$ is summed with a potential-based shaping function to form a potential-based goal-reaching reward. Define the potential function $\phi([s_t, g_t]) \coloneqq - d(s_t, g_t)$, where $d$ is a distance function, $s_t$ is the state observed by the agent at time $t$, and $g_t$ is the provided goal. Because the goal is defined as part of the state, $\phi(\cdot)$ only depends on the state, as required by the formulation of potential-based reward shaping considered by \citet{Ng99policyinvariance}. The potential-based shaping function is then 

\begin{equation}
    F^{goal}([s_t,g_t], [s_{t+1},g_{t+1}]) \coloneqq \gamma \phi([s_{t+1},g_{t+1}]) - \phi([s_t,g_t]).
    \label{eq:goal_potential}
\end{equation}

The potential-based reward function is
\begin{equation}
    r_t^{goal} \coloneqq r_t^{task} + F_t^{goal}.
    \label{eq:goal_rew}
\end{equation}

Note that $r_t^{goal}$ is a goal-reaching reward that can be recomputed with new goals, suitable for goal relabelling. 
The above procedure results in ``original" transitions of the form $([s_t , s^e_{t+1}], a_t, r_t^{goal}, [s_{t+1}, s^e_{t+2}])$, which are stored in a replay buffer.

\textbf{Goal Relabelling. }
To encourage the policy to reach provided goals, we perform the following goal-relabelling procedure on original transitions using previously achieved states as goals. We adopt a similar technique to HER, with a slight modification to the goal sampling strategy. D-Shape's goal sampling strategy consists of sampling consecutive pairs of achieved states  $(g, g')$ from the current episode. The original transitions are then relabelled to $([s_t, g], a_t, r^{goal'}, [s_{t+1}, g'])$, where the reward is recomputed as $r^{goal'} = r_t^{task} + F_t^{goal}([s_t, g], a_t, [s_{t+1}, g'])$. As the goals are imaginary, even if the goal states change, the task reward $r^{task}_t$ remains unchanged. 

The policy $\pi_\theta$ is updated by applying an off-policy RL algorithm to the original data combined with the relabelled data. In our experiments, we use Q-learning  \cite{qlearningwatkins1989}. At inference time, the policy once more acts with demonstration states as goals, i.e., $a_t \sim \pi_\theta(a_t \mid [s_t, s^e_{t+1}])$.

\section{Consistency with the Task Reward} 
\label{section:theory}

\begin{figure}[th]
\begin{center}
  \includegraphics[width=1.7in,height=1.7in,keepaspectratio]{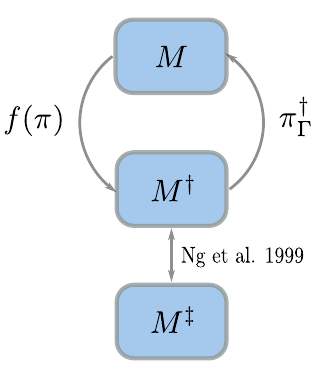}
  \caption{Our theoretical analysis considers D-Shape as the composition of goal relabelling ($M \rightarrow M^\dagger$) and potential-based reward shaping ($M^\dagger \rightarrow M^\ddagger$). D-Shape learns in $M^\dagger$. That optimal policies are preserved from $M^\dagger$ to $M^\ddagger$ follows directly from the policy invariance results of \citet{Ng99policyinvariance}. The theory provided considers a policy $\pi$ in $M$ in $M^\dagger$ via the natural extension, $f(\pi)$, and considers a policy $\pi^\dagger$ in $M^\dagger$ as operating in $M$ via $\Gamma(s)$.}
  \label{figure:mdp_transform}    
\end{center}
\end{figure}

In this section, we prove the claim that D-Shape preserves the optimal policy with respect to the task reward by showing that we can obtain optimal policies on $M$ from an optimal policy learned by D-Shape. The analysis treats D-Shape as the composition of goal relabelling and potential-based reward shaping and formalizes this composition as the MDP transformation, $M \rightarrow M^\dagger \rightarrow M^\ddagger$ (Figure \ref{figure:mdp_transform}). 

Let $M = (S, A, P, r^{task}, \gamma)$ be an MDP with horizon $H$, where $S$ and $A$ are the state and action spaces, $P(s' \mid s, a)$ is the transition dynamics of the environment, $r^{task}(s, a, s')$ is the deterministic extrinsic task reward, and $\gamma \in (0,1)$ is the discount factor. Modify $M$ to $M^\dagger$ as follows: $M^\dagger = ( S \times G, A, P^\dagger, \gamma, r^\dagger)$, where $G$ is a discrete set of goals, and 

\begin{equation*}
    P^\dagger([s', g'] \mid [s, g], a) = P(s' \mid s, a)P(g' \mid [s, g], a),
\end{equation*}
\begin{equation*}
    r^\dagger([s, g], a, [s', g'])  = r^{task}(s, a, s').    
\end{equation*}

We make two independence assumptions in our definition of $P^\dagger$. First, that the random variable $(s' \mid [s, g], a)$ is independent of $(g' \mid [s, g], a)$, allowing us to factorize $P^\dagger([s', g'] \mid [s, g], a) = P(s' \mid [s, g], a) P(g' \mid [s, g], a)$. Second, that $(s' \mid s, g, a)$ is independent of $(g \mid s, a)$, allowing us to rewrite $P(s' \mid [s, g], a) = P(s' \mid s, a)$. In the context of D-Shape, the above assumptions simply mean that a goal in the replay buffer must be independent of all states, goals, and actions other than the previous state, goal, and action. We also require that $r^\dagger$ is independent of goals. We justify that our implementation of D-Shape approximately satisfies these assumptions in the Supplemental Material.

Now define $M^\ddagger = (S \times G, A, P^\dagger, \gamma, r^\dagger + F^{goal})$, where $F^{goal}$ is defined as in Equation \ref{eq:goal_potential}. $M^\ddagger$ is identical to $M^\dagger$, except for the addition of the potential-based shaping function,  $F^{goal}$.

D-Shape learns a goal-conditioned policy $\pi^\ddagger(\cdot \mid [s, g])$ in $M^\ddagger$. To perform inference with the goal-conditioned policy in $M$, we must specify a state-goal mapping $\Gamma : S \mapsto G$. Then $\pi^\ddagger(\cdot \mid [s, \Gamma(s)] )$ can be executed in $M$. Suppose that $\pi^{\ddagger*}(\cdot \mid [s, g])$ is an optimal policy in $M^\ddagger$. Is there a $\Gamma$ such that $\pi^{\ddagger*}(\cdot \mid [s, \Gamma(s)])$ is optimal in $M$? We show next that the answer is positive, and that an arbitrary $\Gamma$ suffices.

That $\pi^{\ddagger*}$ is optimal in  $M^\dagger$ follows from the policy invariance results proven by \citet{Ng99policyinvariance}. By their result, as long as the potential function $F^{goal}$ depends only on the states $([s, g], [s', g'])$ and has the form $F([s, g], [s', g']) = \gamma \phi([s', g']) - \phi([s, g])$, the optimal policy will not be altered by learning under $r^{task} + F^{goal}$.

Thus, the main result reduces to showing that one can obtain an optimal policy in $M$ from an optimal policy in $M^\dagger$(Theorem  \ref{thm:main_result}). The proof of Theorem \ref{thm:main_result} relies on some supporting results, which are stated below and proven in the Supplemental Material. \footnote{Supplemental Material is included in the arXiv version, \texttt{https://arxiv.org/abs/2210.14428.}}

Let us first define a way to map policies in $M$ to $M^\dagger$. Define the policy transformation map, $f(\pi)([s,  g]) = \pi(\cdot \mid s)$. Under the policy transformation map, policies expressible in $M$ can be naturally expressed in $M^\dagger$. 

\textbf{Proposition 1. } Let $\pi(\cdot \mid s)$ be a policy defined in $M$, and let $f(\pi)(\cdot \mid [s,  g]) $ be the extension of $\pi(\cdot \mid s)$ to $M^\dagger$. Then $V_{\pi}(s_t) = V_{f(\pi)}([s_t,  g_t]) \:\: \forall s_t \in S, g_t \in G$, and $t \in \{1,\dots H\}$.

\textbf{Corollary 1.1. } $Q_{f(\pi)}([s_t, g_t], a_t) = Q_{\pi}(s_t, a_t)$ for all $s_t \in S, g_t \in G, a_t \in A,  t \in \{1,\dots H\}$.

\textbf{Proposition 2. } Let $\pi^*(\cdot \mid s)$ be an optimal policy in $M$, and let $f(\pi^*)(\cdot \mid [s,  g])$ be its extension to $M^\dagger$. Then $f(\pi^*)$ is an optimal policy in $M^\dagger$.

Proposition 1 states that the value of $\pi$ in $M$ is equal to the value of $f(\pi)$ in $M^\dagger$ for all $ s \in S, g \in G$, and follows from the observation that the reward function does not depend on goals. Corollary 2 makes the same claim, but for state-action value functions. Proposition 2 states that if a policy $\pi^*$ is optimal in $M$, then $f(\pi^*)$ must also be optimal in $M^\dagger$, and is proven by showing that the policy $f(\pi^*)$ cannot be improved.

\begin{theorem}
\label{thm:main_result}
Let $\pi^{\dagger*}(\cdot \mid [s, g])$ be an optimal policy in $M^\dagger$.  Define an arbitrary state-goal mapping $\Gamma: S \mapsto G_0 \subseteq G$, where $G_0$ is the set of goals in $M^\dagger$ that has positive probability of being reached by any policy. Then $\pi_\Gamma^{\dagger*}(\cdot \mid s) \coloneqq \pi^{\dagger*}(\cdot \mid [s, \Gamma(s)])$ is an optimal policy in $M$.
\end{theorem}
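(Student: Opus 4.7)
My plan is to reduce the claim to showing that $\pi_\Gamma^{\dagger*}$ selects $Q^*$-optimal actions at every state of $M$, where $Q^*$ is the optimal state-action value function in $M$; the Bellman optimality principle then immediately yields optimality of $\pi_\Gamma^{\dagger*}$ in $M$.

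The first step is to establish $Q^{\dagger*}([s,g],a) = Q^*(s,a)$ for every state-goal pair and action, where $Q^{\dagger*}$ denotes the optimal state-action value function in $M^\dagger$. Picking any optimal $\pi^*$ in $M$, Proposition 2 ensures that its lift $f(\pi^*)$ is optimal in $M^\dagger$, so $Q^{\dagger*}([s,g],a) = Q_{f(\pi^*)}([s,g],a)$, and Corollary 1.1 equates the latter to $Q_{\pi^*}(s,a) = Q^*(s,a)$. Chaining these two identities completes the step.

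The second step uses optimality of $\pi^{\dagger*}$ to conclude that the support of $\pi^{\dagger*}(\cdot \mid [s,g])$ is contained in $\argmax_a Q^{\dagger*}([s,g],a)$, which by the first step equals $\argmax_a Q^*(s,a)$. Substituting $g = \Gamma(s)$ and unfolding the definition of $\pi_\Gamma^{\dagger*}$, the support of $\pi_\Gamma^{\dagger*}(\cdot \mid s)$ is contained in $\argmax_a Q^*(s,a)$ for every $s \in S$, which is precisely Bellman optimality in $M$, so $\pi_\Gamma^{\dagger*}$ is optimal in $M$.

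The main obstacle is justifying that $\pi^{\dagger*}$'s optimality transfers to the specific pair $[s,\Gamma(s)]$, which need not arise along any trajectory executed by $\pi^{\dagger*}$ in $M^\dagger$ (since $M^\dagger$ has its own internal goal dynamics that are not tied to $\Gamma$). The restriction $\Gamma(s) \in G_0$, combined with the factorized transitions $P^\dagger([s',g']\mid[s,g],a) = P(s'\mid s,a)\,P(g' \mid [s,g],a)$ that decouple state and goal marginals, is precisely what guarantees that each $[s,\Gamma(s)]$ is a reachable state-goal pair at which the Bellman-optimality condition of $\pi^{\dagger*}$ is in force; formalizing this reachability argument carefully, and making sure the equality of optimal Q-functions is invoked only at such pairs, is the delicate part of the proof.
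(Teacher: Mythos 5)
Your proposal is correct and follows essentially the same route as the paper's proof: lift an optimal $\pi^*$ from $M$ via Proposition 2, use uniqueness of the optimal value function and Corollary 1.1 to get $Q^{\dagger*}([s,g],a) = Q_{\pi^*}(s,a)$, and then observe that the support of $\pi^{\dagger*}(\cdot\mid[s,\Gamma(s)])$ lies in $\argmax_a Q_{\pi^*}(s,a)$. You also correctly identify the role of $G_0$ in ruling out unreachable state-goal pairs where any action is vacuously optimal, which is exactly the technical point the paper addresses in the discussion following its proof.
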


\begin{proof}
We prove the statement by considering the optimal state-action value functions in $M$ and $M^\dagger$. Denote the set of optimal actions at state-goal $[s,  g] \in M^\dagger$ by $A_{s, g} =\argmax_a Q^{\dagger*}([s, g], a) $.

Let $\pi^*(\cdot \mid s)$ be an optimal policy in $M$, and let $f(\pi^*)$ denote the policy induced by $\pi^*$ in $M^\dagger$ as in Proposition 2. By Proposition 2, $f(\pi^*)$ is an optimal policy in $M^\dagger$. Since the optimal value function in $M^\dagger$ is unique, $Q_{f(\pi^*)}([s,g],a) = Q^{\dagger*}([s,g],a)$ for all $s \in S$,  $g \in G$, and $a \in A$. By Corollary 1.1, $Q_{f(\pi^*)}([s,g],a) = Q_{\pi^*}(s,a)$. 
Thus,
\begin{align*}
    A_{s,g} &\coloneqq \argmax_a Q^{\dagger*}([s, g], a) \\
           &= \argmax_a Q_{f(\pi^*)}([s, g], a) \\ 
           &= \argmax_a Q_{\pi^*}(s,a).
\end{align*}
The computation shows that the set of optimal actions in $M^\dagger$ at some $[s,  g] \in S \times G$ is the same as the set of optimal actions in $M$ at $s$ --- that is, the set of optimal actions is independent of the goal $g$. Since the set of actions with positive probability under $\pi^{\dagger*}(\cdot \mid [s,  g])$ is a subset of $A_{s,g}$ for $g\in G_0$, we have $\pi_{\Gamma}^{\dagger *}(\cdot \mid s) := \pi^{\dagger *}(\cdot \mid [s, \Gamma(s)])\subset A_{s, \Gamma(s)} = \argmax_aQ_{\pi^*}(s,a)$ for all $s\in S$. Thus, $\pi^{\dagger*}_\Gamma$ is an optimal policy in $M$.
\end{proof}

The  condition  that $\Gamma(S)  \subseteq G_0 \subseteq G$ avoids the technical issue that at a state $[s,g]$ which has zero probability of being reached, any action is trivially optimal. Thus an optimal policy in $M^\dagger$ at such a state $[s,g]$ might specify an action that is not optimal at $s$ in $M$. The intuition behind the result is that, although the state space and transition probabilities for $M^\dagger$ incorporate a goal distribution, the reward function $r^\dagger$ is goal-independent. We can show that the set of optimal actions at $[s, g] \in S \times G$, operating in $M^\dagger$, is identical to the set of optimal actions at $s$, operating in $M$.\footnote{Indeed, the set of optimal actions at $[s, g]$ is also identical to the set of optimal actions at $[s, g']$ for any $g'$ (operating in $M^\dagger)$, but our proof relies on the set of optimal actions at $s$ (operating in $M$).} Then, since the set of actions considered by $\pi^{\dagger*}$  at $[s, g]$ is optimal for all $[s, g] \in S \times G$, it is certainly optimal at $[s, \Gamma(s)]$.

Theorem \ref{thm:main_result} establishes that it is reasonable to execute $\pi_\Gamma^{\dagger*}$ in $M$ with any $\Gamma(s)$, so long as $\Gamma(S) \subseteq G_0$. In our experiments, we assume that the time step is part of the state and adopt $\Gamma(s_t) = s^e_{t+1}$.\footnote{Appending the time step to the state does not alter optimal policies in an MDP \cite{Pardo2018TimeLI}. This trick is commonly used to ensure that time-limited RL environments maintain the Markov property. Since the time step is defined as part of the state in $M$, the fact that our potential function is time-varying is not a concern. Nevertheless,  note that \citet{devlin12dynamicpbrs} showed that the policy invariance property still holds under time-varying potentials.} Since D-Shape gathers data from the environment using the set of demonstration states as goals, and stores those  transitions in the replay buffer, the technical condition that the goals $G_0$ were seen during learning is satisfied. While our choice of $\Gamma(s)$ may not matter for optimal policies, nevertheless, we use demonstration states as inference-time goals, based on the intuition that demonstration states are useful goals.

\section{Experimental Evaluation}
\label{section:exp_results}
\begin{figure}[t]
\centering
  \includegraphics[width=1.5in]{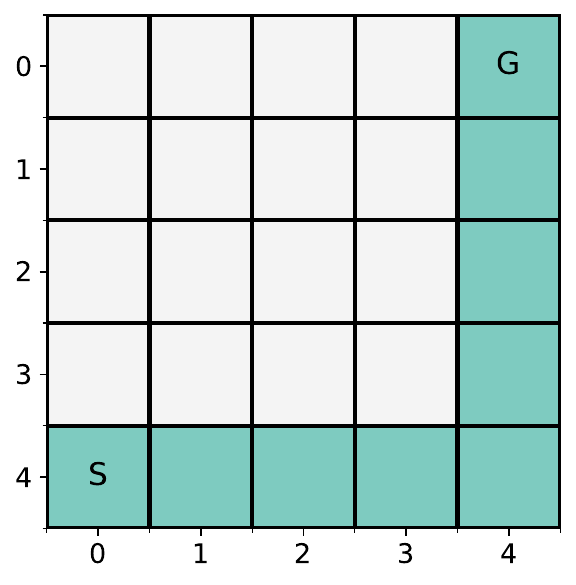}
  \caption{A $5 \times 5$ example of the  gridworld task and the states traversed by the optimal demonstration (shown in blue). The start position is marked by ``S", and the goal position by ``G". The optimal demonstration starts at the ``S" and goes to ``G" along the blue states in the fewest possible number of steps.}
\label{fig:task_fig}
\end{figure}
\begin{figure*}[ht]
\centering
        \includegraphics[width=7in]{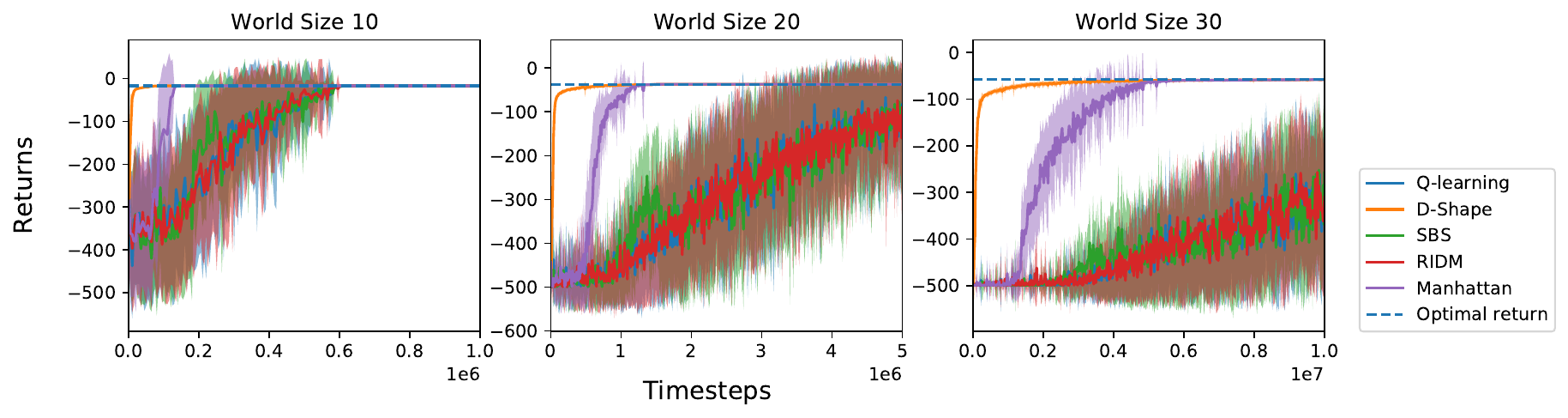}
\caption{Learning curves of D-Shape (our method) using the optimal quality demonstrations, as compared to Q-learning (the base RL method), SBS, RIDM, and Manhattan. D-Shape improves sample efficiency over Q-learning and other baselines by a significant margin over all timesteps.}
\label{fig:core_results}
\end{figure*}

We now empirically study the behavior of D-Shape in gridworld domains. 
We investigate the following questions: 
\begin{itemize}
    \item Does D-Shape improve sample efficiency over RL alone?
    \item How are D-Shape's convergence properties affected by suboptimal demonstrations?
\end{itemize}

Our results show that D-Shape improves sample efficiency over baseline RL methods, and can converge to the optimal policy more rapidly than baseline methods, despite suboptimal demonstrations that do not fully solve the task at hand. 
We also provide the results from an ablation study designed to evaluate the state augmentation and potential-based reward shaping components of D-Shape separately and together with goal-relabelling.

The main experimental procedures are described below; further experimental details are provided in the Supplemental Material.

\subsubsection{Environments.}
The experiments are conducted on gridworlds with side lengths $s \in \{10, 20, 30\}$, where the task is for the agent to navigate from a fixed starting position to some goal position $g$.
Unless otherwise specified, in our experiments, the starting position is one corner of the gridworld, while the goal position is the opposite corner.
The state space consists of the agent's current position, and the action space consists of moving one square up, down, left, or right.
If the agent attempts to move into the boundaries of the gridworld, its position will not change.
The reward function is $r(s) = -\mathds{1}_{s \neq g}$, and episodes terminate either when the agent reaches the goal position or when the episode time limit (500 timesteps) is reached.

These simple gridworld domains were chosen because they possess the key characteristics of having a sparse reward function, and therefore being challenging to explore. 
Further, it is easy to construct optimal and suboptimal demonstrations for this setting, and the state/action spaces are discrete and finite, matching the setting described in the theoretical analysis of D-Shape provided above. 
Experiments were also conducted on a stochastic version of this gridworld, with similar results and findings (see Supplemental Material).

\subsubsection{Baselines.}
For fair comparison, Q-learning \citep{qlearningwatkins1989} is used as the base RL algorithm for all methods and experiments.
D-Shape is compared against RL alone as well as RL+IfO methods that share similar policy invariance guarantees and assumptions.
D-Shape is also compared against RL with a naive RL+IfO reward, where the reward is defined as the sum of the task reward and an imitation reward.
Applying RL with a hybrid task and imitation reward is a common technique to combine RL and IL, but typically requires hyperparameter search and does not provide any robustness guarantees \cite{Zolna2019ReinforcedII, Zhu-RSS-18, Guo2019HybridRL}.
The specific algorithms used as baselines in our experiments are listed below:

\begin{itemize}
    \item \textit{Q-learning } \citep{qlearningwatkins1989}: a classic model-free, off-policy RL method.
    \item \textit{Reinforced Inverse Dynamics Modelling (RIDM)}  \cite{ridm2020}: a model-based RL+IfO algorithm that also assumes only a single demonstration. 
    \item \textit{Similarity Based Shaping (SBS)} \cite{Brys2015ReinforcementLF} : an RL+IL algorithm that incorporates demonstrations through a  potential-based shaping reward. We adapt SBS to the discrete setting, and for state-only demonstrations.
    \item \textit{Q-learning + Manhattan (Manhattan)}: We construct an imitation reward term that consists of the negative Manhattan distance between the state $s_t$ and the demonstration state $s^e_{t+1}$. Q-learning  optimizes for the sum of the imitation reward and the task reward.
\end{itemize}
\begin{figure*}[ht]
\centering
\includegraphics[width=7in]{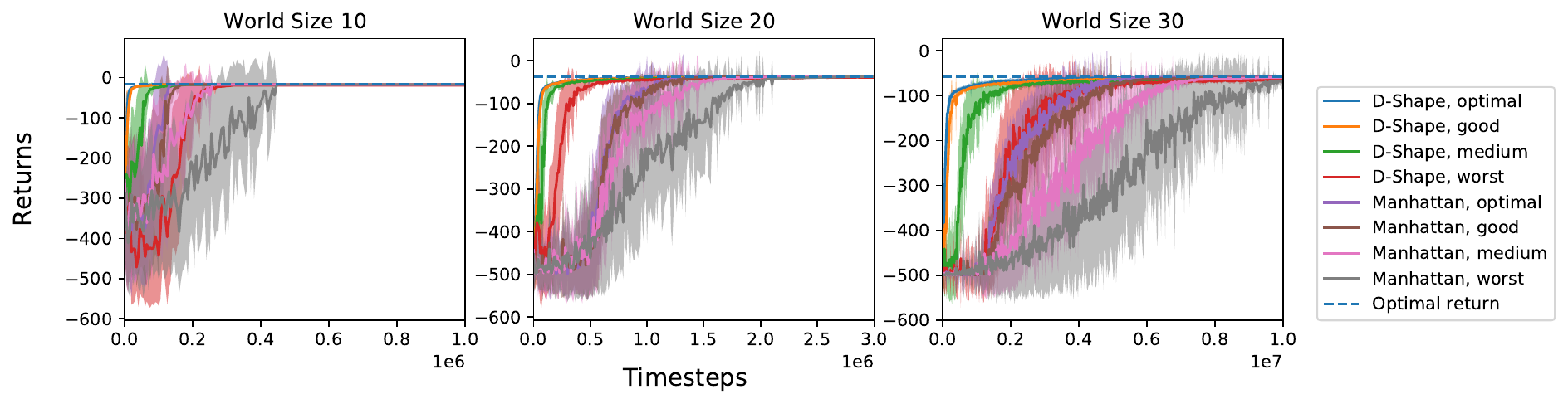}
\caption{Learning curves of D-Shape (our method) and the Manhattan baseline when provided optimal, good, medium, and worst demonstrations. D-Shape converges to the optimal task reward consistently, despite being provided suboptimal demonstrations, and with better sample efficiency than Manhattan.}
\label{fig:vary_demos}
\end{figure*}

\subsubsection{Demonstrations.}
Each algorithm is run with only a single demonstration.
To study the ability of each algorithm to utilize suboptimal demonstrations, the experiments are performed with four demonstrations of varying qualities: optimal, good, medium, and worst.
For the goal-based gridworld tasks considered, any state trajectory from the start position (which is always in the bottom left corner) to the goal position that decreases its distance from the goal by going right or up at each timestep, is an optimal demonstration. Thus, suboptimal demonstrations must consist of trajectories that either (1) go to an incorrect goal, or (2) do not decrease their distance to the goal at some timestep. 

The experiments consider optimal demonstrations---which are manually computed---and suboptimal demonstrations of the first type. More specifically, the optimal demonstrations in the experiments are those that always follow the bottom and right edges of the gridworld; see Figure \ref{fig:task_fig} for an example of the task and style of optimal demonstration used. 
The good, medium, and worst suboptimal demonstrations consist of state trajectories that go to alternative goals that are a Manhattan distance of 2, 4, and 6 steps away from the true goal $g$ for the size $10$ and size $20$ gridworlds. For the size $30$ gridworld, the alternative goals are a Manhattan distance of 4, 8 and 12 steps away from the goal.
Experiments with further types of suboptimal demonstrations (e.g. suboptimal demonstrations of the second type, demonstrations with missing states) may be found in the Supplemental Material.

\subsubsection{Evaluation.} D-Shape and baseline models are trained for a fixed number of time steps. To measure learning speed, the return achieved by each method is evaluated at regular intervals by averaging test returns from the current policy, and is plotted as a learning curve. The shaded region around each learning curve is the standard deviation of the average return. All curves in the following results are computed over 30 independent training runs. The optimal task return is plotted as a horizontal line.

\subsection{D-Shape Improves Sample Efficiency} 

Figure \ref{fig:core_results} compares D-Shape to the baseline methods, where all methods are trained with a demonstration of optimal quality.
It shows that, for all environments, D-Shape was able to improve sample efficiency over Q-learning, SBS, RIDM, and Manhattan.
The closest competing method is Q-learning with the Manhattan distance IfO reward.
The strong performance of this baseline is not surprising, since the Manhattan IfO reward computed with an optimal demonstration is similar to the Manhattan distance of a state to the true goal state---where the latter quantity is actually the value function for this set of tasks \citep{Ng99policyinvariance}. This same Manhattan IfO reward is also the potential function used by D-Shape, yet D-Shape also includes the component of goal relabelling, which may explain why D-Shape's sample efficiency is much higher than that of Manhattan's. The relative contribution of each of D-Shape's components to sample efficiency is further studied in the ablation section.

Figure \ref{fig:state_visitation_dshape} shows the state visitation distribution of a  D-Shape agent on the $20 \times 20$ task, trained with an optimal demonstration analogous to the one shown in Figure \ref{fig:task_fig}. The visitation distribution has highest concentration along the bottom and right edges of the gridworld, which is precisely the path that the demonstration traverses. This confirms that the D-Shape algorithm biases the learning agent to follow the path of the demonstration (rather than following any of the other optimal trajectories for this task).

\begin{figure}[b]
\centering
  \includegraphics[width=2.3in]{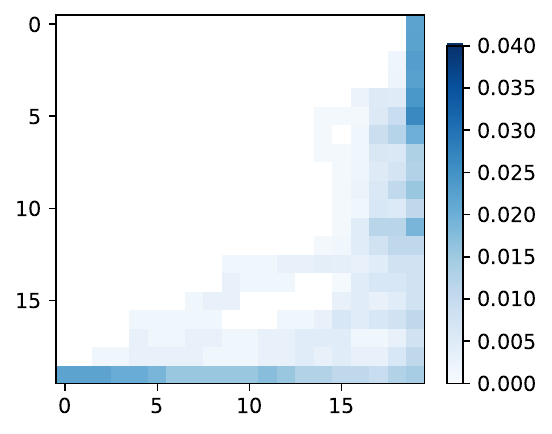}
  \caption{State visitation of D-Shape trained with the optimal demonstration for  the $20 \times 20$ gridworld, where the start position is in the lower leftmost position, and the goal position is in the upper rightmost position. The provided demonstration followed the lower edge to the right, and then went up to the goal state, which is reflected by the state visitation of the learned D-Shape policy.}
\label{fig:state_visitation_dshape}
\end{figure}
\subsection{D-Shape's Convergence and Sample Efficiency is Robust to Demonstration Quality}
\label{subsection:exp_subopt}

The theory provided in Section \ref{section:theory} suggests that D-Shape should converge to the optimal policy even when provided suboptimal demonstrations. This section examines (1) whether the prior theoretical property is exhibited empirically by our implementation of D-Shape, and (2) how the sample efficiency demonstrated by D-Shape in the prior section is affected by suboptimal demonstrations.

More precisely, the theory states that the set of goals D-Shape explores and learns with should not affect D-Shape's ability to learn an optimal policy, as long the goal $g$ at time $t+1$ depends only on the goal, state, and action at time $t$.
We hypothesize that this theoretical guarantee means that D-Shape will still converge to the optimal task reward, no matter the demonstration quality used during training. 

To test the hypothesis, D-Shape is trained with four demonstration qualities: optimal, good, medium, and worst.
In this experiment, D-Shape is compared to the naive RL + IfO baseline, and Q-learning with a Manhattan distance reward.
The same hyperparameters are used for all four demonstration qualities; the hyperparameter tuning procedure is described in the Supplemental Material.

Figure \ref{fig:vary_demos} shows that D-Shape and Q-learning+ Manhattan both converge to the optimal task return for all demonstrations, despite the fact that only the optimal demonstration fully completes the task.
Given the demonstration quality, D-Shape has strictly better sample efficiency and lower learning variance than Q-learning+Manhattan.
While D-Shape's theoretical properties guarantee that it should be able to converge to the optimal policy regardless of the demonstration quality, the Q-learning+Manhattan baseline boasts no such convergence guarantees.
In fact, given a poorly chosen value of the $c$ hyperparameter and a suboptimal demonstration, the learning process of the Q-learning+Manhattan baseline might actually diverge, as the optimal policy has changed (see Figure \ref{fig:subopt_demo_vary_coef}).

\begin{figure}[t]
\centering
  \includegraphics[width=0.45\textwidth]{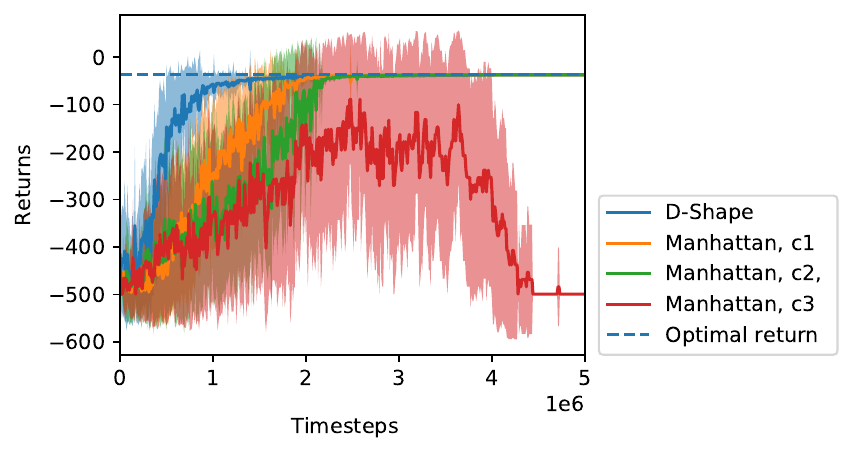}
  \caption{Varying the coefficient on the Manhattan distance IfO reward for the Q-learning + Manhattan distance baseline, for $\vec{c} = [1, 20, 25]$. The learning curve for D-Shape is shown for reference. Note that for $c_3$, Q-learning+Manhattan actually diverges from the optimal task return.}
\label{fig:subopt_demo_vary_coef}
\end{figure}

\subsection{Ablation Study} \label{subsec:ablation}

\begin{figure*}[th]
\centering
  \includegraphics[width=7.5in]{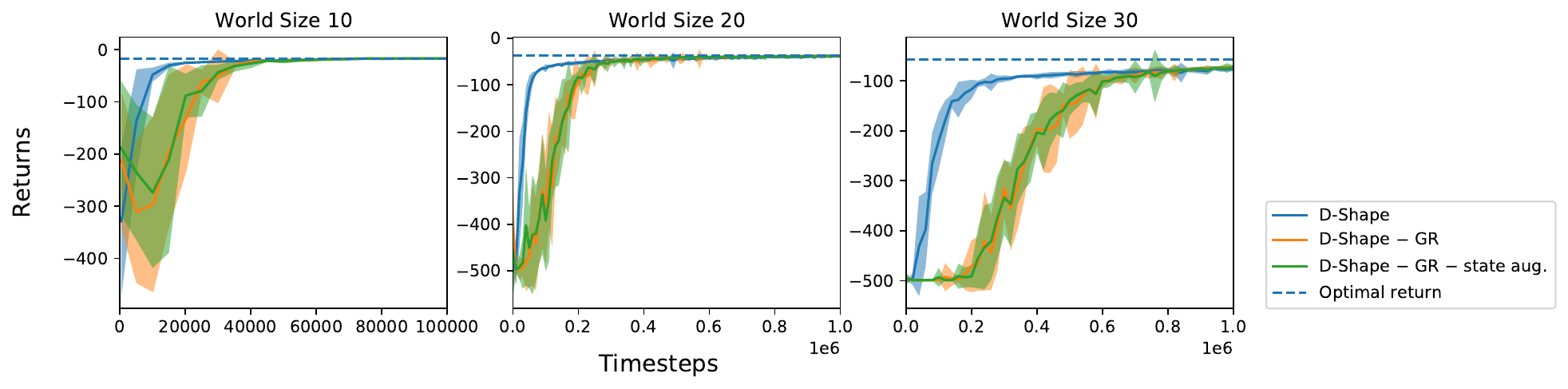}
  \caption{D-Shape compared to two ablations of D-Shape using the optimal demonstration. D-Shape displays better sample-effiency than all ablations. The last ablation ([D-Shape $-$ GR $ -$ shaping]) performed no better than the baseline of Q-learning alone. It is not shown here because the difference between the other ablations would not be visible;  see the Supplemental Material for a version with all ablations.}
\label{fig:ablation}
\end{figure*}

D-Shape draws a connection to GCRL and uses \textit{goal-relabelling} (GR) to unify \textit{state augmentation} and \textit{potential-based reward shaping} from demonstrations (shaping). As such, D-Shape can be thought of as the unification of these three components (GR, state-augmentation, and shaping).

This section investigates three ablations of D-Shape: [D-Shape $-$ GR], [D-Shape $-$ GR $-$ shaping], and [D-Shape $-$ GR $-$ state augmentation].
Note that these are the only three ablations worth studying.
[D-Shape $-$ shaping] is not viable because GR requires a goal-reaching reward; similarly [D-Shape $-$ state augmentation] cannot be studied because goals must be presented to the agent.
[D-Shape $-$ shaping $-$ GR $-$ state augmentation] is equivalent to baseline Q-learning.

The results are shown in Figure \ref{fig:ablation}.
In all domains, D-Shape dominates all three ablations.
Two of the  ablations--- [D-Shape $-$ GR $-$ state augmentation] and [D-Shape $-$ GR] --- perform similarly.
The final ablation, [D-Shape $-$ GR $-$ shaping] performed much more poorly than the others, showing similar sample efficiency to the baseline of Q-learning.
Therefore, this ablation is not shown on Figure \ref{fig:ablation}; see the Supplemental Material to view all ablations together.
The results here imply that for the gridworld domain studied, a large part of D-Shape's improvement in sample efficiency over baseline methods comes from potential-based shaping with demonstrations, and the remaining benefit comes from its GR component.
While state augmentation alone does not seem to produce any gains in sample efficiency, note that it is not possible to perform goal-relabelling without augmenting the state with the goal description.
Thus, we conclude that GR is a crucial component of D-Shape and has a synergistic effect on state augmentation and potential-based reward shaping. 

\section{Related Work} \label{section:related_work}

D-Shape is closely related to methods from the RL+IfO literature and the GCRL literature.
Here, we review work from the RL+IfO literature that is robust to demonstration quality, and work from the GCRL literature that uses demonstrations.

\subsubsection{Combining RL and IL} 

A common approach to combining RL and IL is to define a hybrid reward function that is the sum of the task reward and an imitation reward \cite{Zhu-RSS-18, Guo2019HybridRL}.
Imitation rewards proposed in the literature are derived from similarity measures \cite{Sermanet2018TimeContrastiveNS}, behavior cloning \cite{bain1996, bco2018}, inverse RL \cite{abbeel2004irl, Brown2019BetterthanDemonstratorIL}, or generative adversarial IL \cite{gail2016, gaifo2019, ding19gcil}.
However, if demonstrations are suboptimal, the imitation objective may conflict with the task objective.
The research community has proposed various techniques to remedy this problem.
Simpler solutions consist of tuning the relative weights between the imitation and task reward and/or annealing the imitation reward to zero as training progresses \cite{ding19gcil}.
\citet{Zolna2019ReinforcedII} introduce a method to automatically control the tradeoff between imitation and task objectives during training with Bernoulli random variables.
While each of these methods has demonstrated some experimental success, each introduces a new set of hyperparameters that must be carefully tuned.

Others in the research community have defined hybrid rewards using potential-based reward shaping (PBRS) \cite{Ng99policyinvariance}, and have demonstrated that the policy invariance guarantees of PBRS result in robustness to demonstration quality.
\citet{Brys2015ReinforcementLF} defines a potential function for reward shaping, where the potential function is a similarity measure between agent states and expert states.
Similarly, \citet{Suay2016LearningFD} use inverse RL to recover a linear reward function from the demonstration, and formulate a dynamic potential function from the recovered reward.
\citet{Wu2020ShapingRewards} explore using generative modelling techniques to construct a potential-based shaping function. 
Combining PBRS with goal relabelling has been relatively understudied. 
This work explores using potential-based reward functions as goal-reaching rewards, where the goal is given by demonstration states.

There are several other approaches for combining RL and IL that avoid potential conflict between imitation and task rewards by optimizing only the task reward, and incorporating demonstration knowledge elsewhere in the learning process.
For instance, state augmentation (SA) approaches concatenate the agent state with a representation of the demonstration state(s) \cite{ridm2020, Paine2018OneShotHI}. 
Initialization approaches incorporate demonstration knowledge through pretraining the policy or value functions \cite{Hester2018DeepQF, Taylor2011IntegratingRL}. 
Resetting to demonstration states is another technique that has been shown to be effective for improving the sample efficiency and asymptotic performance of RL \cite{salimans2018LearningMR, Ecoffet2019GoExploreAN,Nair2018OvercomingEI}.
While the aforementioned techniques have demonstrated success with optimal demonstrations, they have not been explicitly evaluated for their robustness to suboptimal demonstrations. 
This work uses SA as a component of D-Shape, and our theoretical results directly apply to SA, showing that SA should not modify the optimal policy with respect to the task reward. 
Therefore, this work provides theoretical grounding for the intuition that SA is robust to demonstration quality.

\subsubsection{Accelerating GCRL with Demonstrations.}
There is some work in the GCRL literature that investigates using demonstrations to speed up learning in classic GCRL tasks (e.g., maze tasks, target-reaching tasks).
For example, \citet{Nair2018OvercomingEI} leverage state-action demonstrations in a GCRL algorithm through a demonstration buffer, an auxiliary behavioral cloning loss with a Q-filter, and resetting to demonstration states.
\citet{ding19gcil} incorporate demonstrations into a GCRL algorithm by adding an imitation reward derived from generative adversarial IL to a sparse goal-reaching reward.
Plan-based reward shaping methods consider designing potential-based shaping rewards when given access to an approximate plan or demonstration\citep{schubert21planbased, Brys2015ReinforcementLF}. 
Most similar to this work, \citet{Paul2019LearningFT} learn subgoals from a set of demonstration trajectories and define a potential-based shaping function from the subgoals.
Unlike \citet{Paul2019LearningFT}, we study goal-relabelling, demonstrate the efficacy of D-Shape using only a single expert demonstration, and examine using demonstration states directly as goals in a time-aligned manner.

\section{Conclusion}
\label{section:conclusion}

In this work, we investigated using techniques from goal-conditioned reinforcement learning to combine potential-based reward shaping and state augmentation, and introduced D-Shape, a novel RL+IfO algorithm.
We showed theoretically that an optimal policy learned using D-Shape can be executed optimally with an arbitrary sequence of goals, and demonstrated empirically that D-Shape improves sample efficiency and is robust to suboptimal demonstrations in gridworld domains.
In our experiments, D-Shape was able to (1) improve sample efficiency over all baselines, and (2) consistently converge to the optimal policy with better sample efficiency than baselines, when provided with suboptimal demonstrations.

This paper evaluates D-Shape based on a sample efficiency criterion in discrete, goal-based gridworlds.
Future work could assess D-Shape's ability to generalize to new demonstrations on more complex gridworld tasks, such as those with obstacles.
Another direction of future work is extending D-Shape to continuous state-action spaces. 
In this work, we augment agent states with demonstration states as goals, where the demonstration states have the same representation as the agent states, but learned goal representations may be necessary to extend D-Shape to continuous state-action spaces.
A limitation of D-Shape is that if the provided potential function does not improve the reinforcement learner's performance, then D-Shape also will not improve performance. 
Thus, learning potential functions—an active research area \citep{Grzes2010OnlineLO, KhansariZadeh2017LearningPF}---is an important future direction.
Finally, this work investigates D-Shape with a single demonstration only, but there is a rich set of techniques from GCRL that could potentially be used to incorporate multiple demonstrations into D-Shape.

\begin{acks}
This work has taken place in the Learning Agents Research
Group (LARG) at the Artificial Intelligence Laboratory, The University
of Texas at Austin.  LARG research is supported in part by the
National Science Foundation (CPS-1739964, IIS-1724157, FAIN-2019844),
the Office of Naval Research (N00014-18-2243), Army Research Office
(W911NF-19-2-0333), DARPA, Lockheed Martin, General Motors, Bosch, and
Good Systems, a research grand challenge at the University of Texas at
Austin.  The views and conclusions contained in this document are
those of the authors alone.  Peter Stone serves as the Executive
Director of Sony AI America and receives financial compensation for
this work.  The terms of this arrangement have been reviewed and
approved by the University of Texas at Austin in accordance with its
policy on objectivity in research.
\end{acks}

\bibliographystyle{ACM-Reference-Format} 
\bibliography{references}

\clearpage
\newpage

\section*{Supplement}
\label{section:appendix}

\subsection{Proofs}
\label{subsection:proofs}

We consider the finite-horizon MDP $M = (S, A, P, r, \gamma)$ with horizon $H$ and the goal-conditioned version of $M$, $M^\dagger = ( S \times G, A, P^\dagger, r^\dagger, \gamma)$. All terms are defined as in Sections 2 and 3. We prove two supporting propositions before proving the main result (Theorem 1).

\textbf{Proposition 1. } Let $\pi(\cdot \mid s)$ be a policy defined in $M$, and let $f(\pi)(\cdot \mid [s,  g]) $ be the extension of $\pi(\cdot \mid s)$ to $M^\dagger$. Then $V_{\pi}(s_t) = V_{f(\pi)}([s_t,  g_t]) \:\: \forall s_t \in S, g_t \in G$, and $t \in \{1,\dots H\}$.

\begin{proof}
Let $s^g_t$ denote a state-goal pair, $[s_t,  g_t]$. 
Let $\tau_t$ denote trajectories in $M$, i.e. $\tau_t = (s_t, a_t, \dots, s_H, a_H)$. 
Let $\tau^G_t$ denote (goal-conditioned) trajectories in $M^\dagger$, i.e. $(s^g_t, a_t, \dots , s^g_H, a_H )$. The conditional probability of a trajectory is 
\begin{equation*}
p_{\pi}(\tau_t \mid s_t, a_t) = \prod_{t'=t}^{H-1} \pi(a_{t'} \mid s_{t'}) P(s_{t'+1}\mid s_{t'}, a_{t'} ).
\end{equation*}
The reward of a trajectory is defined as follows: 
\begin{equation*}
r(\tau_t) = \sum_{t'= t}^{H-1} \gamma^{t'-t} r(s_{t'+1}, a_{t'}, s_{t'}).    
\end{equation*}
Notice that as the reward function does not depend on goals, $r^\dagger(\tau^G_t) = r(\tau_t)$.

Computing the value of $f(\pi)$ at $s^g_t$ from definition:
\begin{align*}
    V_{f(\pi)}(s^g_t) &= \E_{\tau^G_t \sim p_{f(\pi)}} \Big[r^\dagger(\tau^G_t) \mid s^g_t \Big] \\ 
                         &= \sum_{\tau^G_t}  \Big[ p_{f(\pi)}(\tau^G_t \mid s^g_t) r^\dagger(\tau^G_t) \Big] \\
                         &= \sum_{\tau^G_t}  \Big[ \Big( \prod_{t'=t}^{H-1} f(\pi)(a_{t'} \mid s^g_{t'}) P^\dagger(s^g_{t'+1}\mid s^g_{t'},     a_{t'} ) \Big) r^\dagger(\tau^G_t) \Big] \\
                         &= \sum_{\tau^G_t}  \Big[ \Big( \prod_{t'=t}^{H-1} \pi(a_{t'} \mid s_{t'}) P(s_{t'+1}\mid s_{t'}, a_{t'}) \\
                         & \qquad \qquad  P(g_{t'+1} \mid [s_{t'}, g_{t'}], a_{t'})\Big) r(\tau_t) \Big].
\end{align*}

The outermost summation is over $\tau^G_t$, representing summing over all values of $(s^g_t, a_t, \dots , s^g_H, a_H )$ --- a summation over all possible trajectories in $M^\dagger$. We organize these trajectories into sets where the variables $\{s_{t'}, a_{t'}\}_{t'=t}^{H-1}$ are fixed, and only $\{g_{t'}\}_{t'=t}^{H-1}$ may vary. Let $\tau^{s,a}_t \coloneqq \{s_{t'}, a_{t'}\}_{t'=t}^{H-1}$ (state-action sequences) and $\tau^{g \mid sa}_t \coloneqq \{g_{t'} \mid s_{t'}, a_{t'}\}_{t'=t}^{H-1}$ (sequences of goals). Thus, we split the outermost summation over $\tau^G_t$ into two summations where the outer summation is over  $\tau^{s,a}_t $  and the inner summation is over $\tau^{g \mid sa}_t$. Reorganizing the sum is permissible by the commutative property of addition and the fact that we are dealing with finite sums. Since the trajectory rewards, transition probabilities, and action probabilities only depend on $\tau^{s,a}_t $, we may factor it out of the inner summation. Continuing the derivation from above:
\begin{align*}
    V_{f(\pi)}(s^g_t) &= \sum_{\tau^{s,a}_t} \sum_{\tau^{g \mid sa}_t} \Big[ \Big( \prod_{t'=t}^{H-1} \pi(a_{t'} \mid s_{t'}) P(s_{t'+1}\mid s_{t'}, a_{t'}) \\
                    & \qquad \qquad  P(g_{t'+1} \mid [s_{t'}, g_{t'}], a_{t'})\Big) r(\tau_t)  \Big] \\
                    &= \sum_{\tau^{s,a}_t} \Big(\Big( \prod_{t'=t}^{H-1} \pi(a_{t'} \mid s_{t'}) P(s_{t'+1}\mid s_{t'}, a_{t'})\Big) r(\tau_t) \Big)  \\
                    & \qquad \qquad \sum_{\tau^{g\mid sa}_t} \Big( \prod_{t'=t}^{H-1} P(g_{t'+1} \mid [s_{t'}, g_{t'}], a_{t'}) \Big).
\end{align*}

In the innermost summation over $\tau^{g\mid sa}_t$, we are left only with the product of conditional goal probabilities. This is precisely the probability of a goal sequence $p(\tau^{g \mid sa}_t)$, given that the states and actions are fixed, which sums to 1. The remaining terms depend only on the sequence of states and actions, and are precisely $V_\pi(s_t)$, completing the proof.
\begin{align*}
    V_{f(\pi)}(s^g_t) &=  \sum_{\tau^{s,a}_t} \Big( \Big( \prod_{t'=t}^{H-1} \pi(a_{t'} \mid s_{t'}) P(s_{t'+1}\mid s_{t'}, a_{t'}) \Big) r(\tau_t) \Big)  \\
                    & \qquad \qquad  \sum_{\tau^{g\mid sa}_t} p(\tau^{g\mid sa}_t) \\ 
                    &= \sum_{\tau^{s,a}_t} \Big( \Big( \prod_{t'=t}^{H-1} \pi(a_{t'} \mid s_{t'}) P(s_{t'+1}\mid s_{t'}, a_{t'})\Big) r(\tau_t) \Big) \\
                    &= V_{\pi}(s_t).
\end{align*}
\end{proof}

\textbf{Corollary 1.1. } $Q_{f(\pi)}([s_t, g_t], a_t) = Q_{\pi}(s_t, a_t)$ for all $s_t \in S, g_t \in G, a_t \in A,  t \in \{1,\dots H\}$.

\begin{proof}
The proof follows the same method as the proof for Proposition 1.
\end{proof}

\textbf{Proposition 2. } Let $\pi^*(\cdot \mid s)$ be an optimal policy in $M$, and let $f(\pi^*)(\cdot \mid [s,  g])$ be its extension to $M^\dagger$. Then $f(\pi^*)$ is an optimal policy in $M^\dagger$.

\begin{proof}
We will prove the statement by showing that the value of $f(\pi^*)$ is already maximal. 

By Corollary 1.1, $Q_{f(\pi^*)}([s, g], a) = Q_{\pi^*}(s, a)$. By definition of optimality, $\max_a Q_{\pi^*}(s,a)  = V_{\pi^*}(s)$. By Proposition 1, $V_{\pi^*}(s) = V_{f(\pi^*)}(s)$. Applying these three facts,
\begin{align*}
    \max_a Q_{f(\pi^*)}([s, g], a) &= \max_a Q_{\pi^*}(s,a) \\ 
             &= V_{\pi^*}(s) \\ 
             &= V_{f(\pi^*)}([s,  g]).
\end{align*}
Since $\max_a Q_{f(\pi^*)}([s, g], a) = V_{f(\pi^*)}([s,  g])$, the value of $f(\pi^*)([s,  g])$ is already maximal for all $[s,  g] \in S \times G$. Thus, $f(\pi^*)$ cannot be improved and must already be optimal in $M^\dagger$.

\end{proof}

For reader convenience, the statement and proof of Theorem 1 is reproduced below.

\textit{\textbf{Theorem 1. } Let $\pi^{\dagger*}(\cdot \mid [s, g])$ be an optimal policy in $M^\dagger$.  Define an arbitrary state-goal mapping $\Gamma: S \mapsto G_0 \subseteq G$, where $G_0$ is the set of goals in $M^\dagger$ that has positive probability of being reached by any policy. Then $\pi_\Gamma^{\dagger*}(\cdot \mid s) \coloneqq \pi^{\dagger*}(\cdot \mid [s, \Gamma(s)])$ is an optimal policy in $M$.
}
\begin{proof}
We prove the statement by considering the optimal state-action value functions in $M$ and $M^\dagger$. Denote the set of optimal actions at state-goal $[s,  g] \in M^\dagger$ by $A_{s, g} =\argmax_a Q^{\dagger*}([s, g], a) $.

Let $\pi^*(\cdot \mid s)$ be an optimal policy in $M$, and let $f(\pi^*)$ denote the policy induced by $\pi^*$ in $M^\dagger$ as in Proposition 2. By Proposition 2, $f(\pi^*)$ is an optimal policy in $M^\dagger$. Since the optimal value function in $M^\dagger$ is unique, $Q_{f(\pi^*)}([s,g],a) = Q^{\dagger*}([s,g],a)$ for all $s \in S$,  $g \in G$, and $a \in A$. By Corollary 1.1, $Q_{f(\pi^*)}([s,g],a) = Q_{\pi^*}(s,a)$. 
Thus,
\begin{align*}
    A_{s,g} &\coloneqq \argmax_a Q^{\dagger*}([s, g], a) \\
           &= \argmax_a Q_{f(\pi^*)}([s, g], a) \\ 
           &= \argmax_a Q_{\pi^*}(s,a).
\end{align*}
The computation shows that the set of optimal actions in $M^\dagger$ at some $[s,  g] \in S \times G$ is the same as the set of optimal actions in $M$ at $s$ --- that is, the set of optimal actions is independent of the goal $g$. Since the set of actions with positive probability under $\pi^{\dagger*}(\cdot \mid [s,  g])$ is a subset of $A_{s,g}$ for $g\in G_0$, we have $\pi_{\Gamma}^{\dagger *}(\cdot \mid s) := \pi^{\dagger *}(\cdot \mid [s, \Gamma(s)])\subset A_{s, \Gamma(s)} = \argmax_aQ_{\pi^*}(s,a)$ for all $s\in S$. Thus, $\pi^{\dagger*}_\Gamma$ is an optimal policy in $M$.

\end{proof}

We note that there is an alternative way to think about obtaining optimal policies in $M$ from an optimal policy in $M^{\dagger}$. We could prove, given that the rewards in $M^{\dagger}$ are agnostic to the goals, that an optimal policy in $M^{\dagger}$ is still optimal if we change the goal transition probabilities (while maintaining that the same $[s,  g]$ have positive support). Then we could modify $M^{\dagger}$ to an MDP that is equivalent to $M$ by just changing the the goal transition probabilities, and thus we would obtain an optimal policy in $M$.

\subsection{Further Experimental Results}
\label{subsection:exp_results}

Figure \ref{fig:ablation_with_ridm} reproduces the ablation study presented in the main paper, but \textit{with} RIDM as well. The remainder of this section provides further experiments supporting the results in the main paper. 

\begin{figure*}[htbp]
\centering
  \includegraphics[width=7.5in]{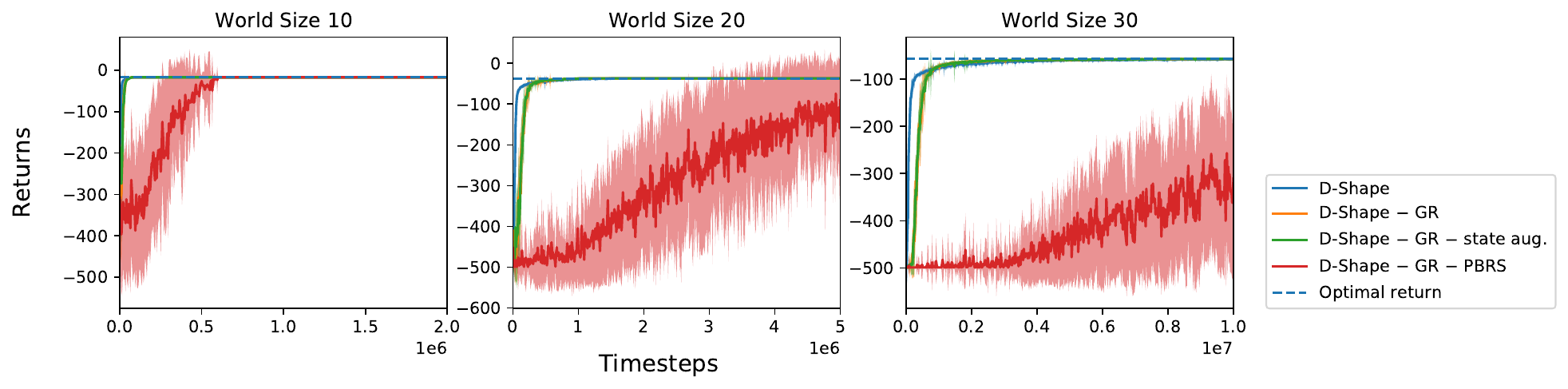}
  \caption{D-Shape compared to all three ablations of D-Shape using the optimal demonstration. D-Shape displays better sample-effiency than all ablations. The last ablation (DShape without PBRS or goal relabelling, i.e. state augmentation only) is shown on this plot. Optimal task return is shown as a horizontal line. Means and standard deviations are computed over 30 training runs.}
\label{fig:ablation_with_ridm}
\end{figure*}

\subsubsection{Stochastic Environments}
We also ran the main experiments on a stochastic version of the gridworld environment in the main paper, with similar findings. The results are shown in Figure \ref{fig:stochastic_core}. The main difference from the deterministic setting is that the learning curves are higher variance, demonstrating the validity of our approach even in stochastic environments. 

\begin{figure*}[th]
\centering
  \includegraphics[width=7.5in]{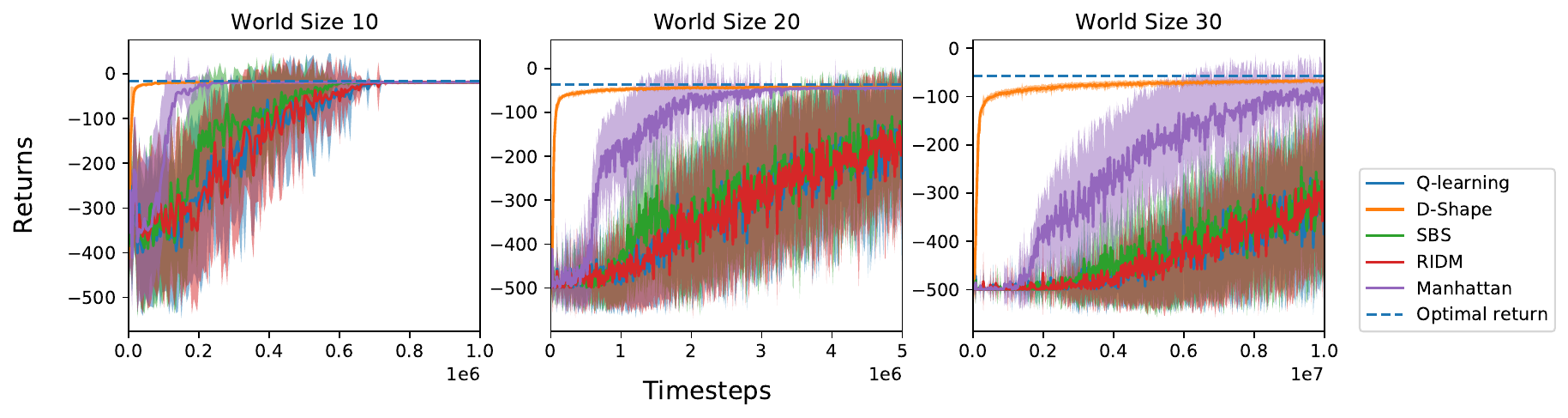}
  \caption{Learning curves of D-Shape with optimal quality demonstrations as compared to baseline methods, on a gridworld with stochastic transitions, where with 0.1 probability, a random action will be taken rather than that selected by the policy.}
\label{fig:stochastic_core}
\end{figure*}

\subsubsection{Further Types of Suboptimal Demonstrations}

The experiments in Section \ref{subsection:exp_subopt} show that for the first type of suboptimal demonstrations mentioned in the main paper (those that go to an incorrect goal state), D-Shape’s performance declines less than that of the Manhattan baseline. Further, DShape always converges to the optimal return, unlike Manhattan. Here, we examine whether D-Shape is robust to the second type of suboptimal demonstrations (those that do not decrease their distance to the goal at some timestep).

Two demonstration styles that satisfy this second type of suboptimality are considered here. 
Figure \ref{fig:subopt_demos_extended} shows how D-Shape handles  demonstrations consisting of an optimal path to the goal tile, where each state is repeated $N$ times ($N$ in $\{1, 3, 5\}$), as compared to the Manhattan baseline. Similar to Figure \ref{fig:vary_demos}, we find that D-Shape's convergence and sample efficiency is more robust to suboptimal demonstrations than Manhattan. 

We also consider demonstrations that are \textit{missing} states, an interesting type of suboptimality that may come up in real applications (Figure \ref{fig:subopt_demos_missing}). Interestingly, we find that D-Shape is strongly robust to this type of suboptimality, showing almost no decline in performance. We hypothesize this is because D-Shape treats each state as its own goal, so it is be able to handle demonstration trajectories with missing states. 

\begin{figure}[htbp]
\centering
  \includegraphics[width=3.5in]{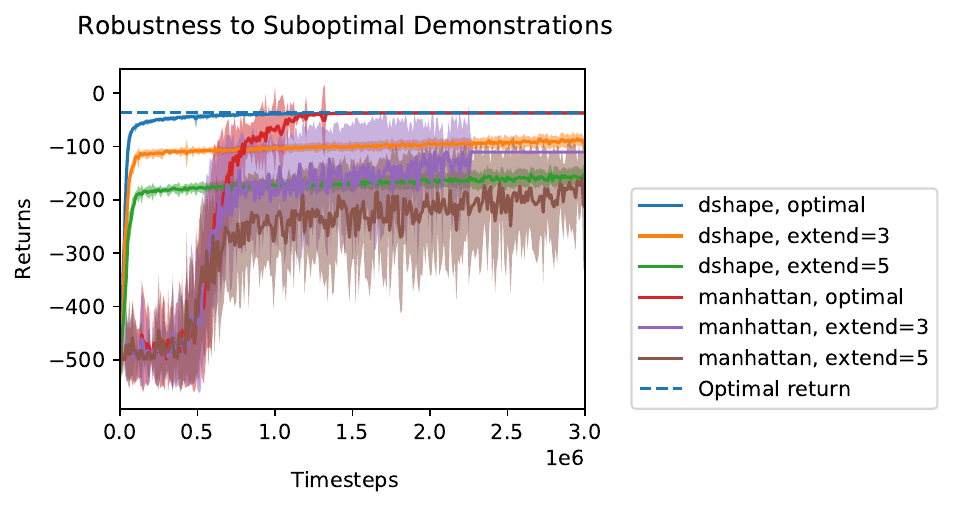}
  \caption{Learning curves of D-Shape compared to Manhattan with  suboptimal demonstrations that never increase distance from the goal, but ``linger" at each state for $N = \{1, 3, 5\}$ steps. Results shown are for the $20 \times 20$ gridworld.}
\label{fig:subopt_demos_extended}
\end{figure}

\begin{figure}[htbp]
\centering
  \includegraphics[width=3.5in]{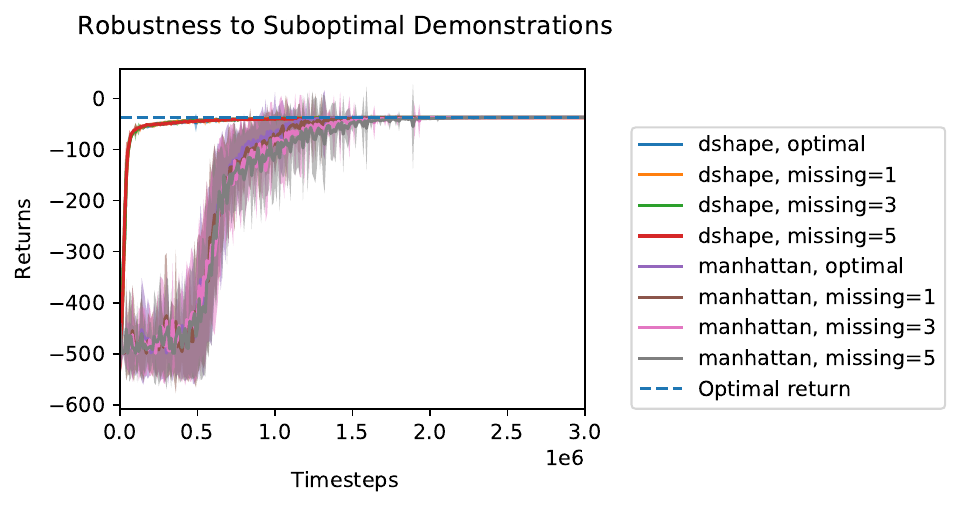}
  \caption{Learning curves of D-Shape compared to Manhattan with suboptimal demonstrations that are missing information (i.e. trajectories are missing states). Results shown are for the $20 \times 20$ gridworld.}
\label{fig:subopt_demos_missing}
\end{figure}

\subsection{Experimental Details}
\label{subsection:exp_procedure}

\subsubsection{Base RL Algorithm (Q-learning). } 
We use our own implementation of Q-learning with a tabular representation of the value function, where updates are performed based on data sampled from a replay buffer. Hyperparameters for baseline Q-learning are given in Table \ref{tbl:hyperparam}. Since all the methods in this paper use Q-learning as the base RL algorithm, these hyperparameters are also used there. 

\begin{table}[ht]
\centering
\begin{tabular}{ll}
\hline
Hyperparameter                  & Value  \\ \hline
training timesteps           & 250000 \\
max timesteps per episode & 500    \\
$\epsilon$               & 0.2    \\
$\alpha$                 & 0.1    \\
$\gamma$                 & 1      \\
updates per step      & 20     \\
buffer size           & 5000   \\ 
\hline
\\
\end{tabular}
\caption{Hyperparameters for the baseline of Q-learning. $\alpha$ is the learning rate, $\epsilon$ is the parameter for $\epsilon$-greedy exploration, and $\gamma$ is the discount factor.}
\label{tbl:hyperparam}
\end{table}

\subsubsection{PBRS in episodic settings. }
\citet{Ng99policyinvariance} proved their result in the continuing setting. \citet{Grzes2017RewardSI} showed that in order for the policy invariance result to hold in the episodic setting,  we must set $\phi(s_N) = 0$ for all terminal states $s_N$. We follow this recommendation for all methods that use a potential-based reward function (D-Shape, D-Shape ablations with potential-based shaping, and SBS). 

\subsubsection{D-Shape (our method).} Transitions in the replay buffer are of the form, $([s_t, g_t], a_t, r_t, [s_{t+1}, g_{t+1}])$. One assumption in the theory behind D-Shape is that $g_{t+1}$ depends only on $([s_t, g_t], a_t)$. Goals in the replay buffer come from two sources: the expert demonstration, or are sampled using the goal sampling strategy. We argue below that our implementation of D-Shape approximately satisfies the independence assumption. 

Goals either originate from the goal-sampling strategy or are expert states. D-Shape is implemented with a variant of the  \texttt{episode} goal sampling strategy \cite{andrychowicz17her}. The variant consists of uniformly sampling subsequent states at time $k$, $(s_k, s_{k+1})$ from the current episode to use as goal states at time $t$, $(g_t, g_{t+1})$.  Since $s_{k+1}$ depends on $s_k, a_k$, thus, $g_{t+1}$ depends only on $(g_t, a_k)$. In future work, we plan to investigate alternative goal sampling strategies that do not rely on the action $a_k$. The remaining transitions have time-aligned expert states as goal states, i.e. $(g_t, g_{t+1}) = (s^e_t, s^e_{t+1})$. Since our implementation includes timesteps are part of the state, $s^e_{t+1}$ can be determined from $s_t$. 

In the experiments, each sampled transition is relabelled with 3 goals. We found that relabelling with more goals did not improve performance. 

\subsubsection{RIDM \cite{ridm2020}. }
\citet{ridm2020} implemented RIDM with CMA-ES, an evolutionary optimization algorithm \cite{cmaes2001}. Since the optimization technique is not fundamental to their method, for fair comparison, we instead implement RIDM using the base RL algorithm (Q-learning). This turns out to be equivalent to Q-learning with state augmentation. RIDM does not introduce any hyperparameters.

\subsubsection{SBS \cite{Brys2015ReinforcementLF}. }
The original SBS method defines a potential function in terms of a similarity measure between states from the demonstration $s^d$, and the agent's current state $s$, $m(s^d, s)$. Denote the set of demonstrations by $D$. The potential function is of the following form: 

$$
\phi^D(s,a) = \max_{(s^d, a) \in D} m(s, s^d).
$$
This function computes the maximal similarity between agent states $s$ and demonstration states $s^d$ that come from samples with the same action as the agent. To implement SBS with state-only demonstrations, we simply remove the dependency on $a$:
$$
\phi^D(s) = \max_{s^d \in D} m(s, s^d).
$$ 

The particular similarity measure used by \citet{Brys2015ReinforcementLF} is a non-normalized multi-variate Gaussian with mean $s^d$ and covariance $\Sigma$, 
$$
g(s, s^d, \Sigma) = e^{(-\frac12 (s-s^d)^T \Sigma^{-1} (s - s^d))}.
$$
In \citet{Brys2015ReinforcementLF}, $\Sigma \coloneqq \sigma I$, where $\sigma$ is a constant, and state variables are mapped to $[0,1]$. Thus, in practice, the similarity metric is the exponentiated negative Euclidean distance with a scaling parameter $\sigma$.  Since the Euclidean distance metric is not a suitable distance metric for the discrete gridworld environments considered in our experiments, we instead replace it with the $L_1$ (Manhattan) distance. We also map the state variables into $[0, 1]$ and tune $\sigma$ for each gridworld task. As suggested by \citet{Brys2015ReinforcementLF}, demonstration states are stored in a k-d tree and $g$ is computed by querying the k-d tree. 

\textbf{Hyperparameter search procedure:} We perform a hyperparameter search over $\sigma$ and a reward scaling parameter for the shaping reward, $c$, using the $10\times10$ gridworld domain. The performance of each parameter combination is summarized by the sum of all returns along the training curve, an evaluation metric that balances between sample efficiency and asymptoptic return. 

The search range is given below, and the final hyperparameter choice is bolded:
$$\sigma \in [0.1, 1, \textbf{10}]$$
$$c \in [0.1, \textbf{1}, 10].$$

\subsubsection{Q-learning + Manhattan. }
This baseline consists of running baseline Q-learning with the following hybrid reward function:
$$
r_t^{task} - c || s_t - s^e_{t+1}||_1.
$$

The second term in the above reward function is the Manhattan distance of the current agent state from the next expert state, and is scaled by $c$. 

The hybrid reward function has three hyperparameters: $\alpha, \beta, \gamma \in [0, 1]$. We perform a hyperparameter search over the following values using the same procedure as for SBS. The final selected value is bolded: 
$$c \in [0.01, 0.1, \textbf{1}].$$

\end{document}